\newcommand{\cmark}{\text{\ding{51}}}
\newcommand{\xmark}{\text{\ding{55}}}
\newcommand{\SegVec}[1]{\mathbf{#1}}
\newcommand{\Outperform}[1]{\mathbf{#1}}
\newcommand{\TableSize}{\footnotesize} 
\newtheorem{proposition}{Proposition}
\title{SEEK: Segmented Embedding of Knowledge Graphs}
\author{Wentao  Xu$^1$, Shun Zheng$^2$, Liang He$^2$, Bin Shao$^2$, Jian Yin$^1$\thanks{\ \ Corresponding author.}, and Tie-Yan Liu$^2$\\
  $^1$ School of Data and Computer Science, Sun Yat-sen University, Guangzhou, China;\\
  Guangdong Key Laboratory of Big Data Analysis and Processing, Guangzhou, China\\
  $^2$ Microsoft Research Asia, Beijing, China \\
  \texttt{\{xuwt6@mail2,issjyin@mail\}.sysu.edu.cn} \\
  \texttt{\{Shun.Zheng,Liang.He,binshao,tyliu\}@microsoft.com} \\
}
\date{}
\begin{document}
\maketitle

\begin{abstract}
In recent years, knowledge graph embedding becomes a pretty hot research topic of artificial intelligence and plays increasingly vital roles in various downstream applications, such as recommendation and question answering.
However, existing methods for knowledge graph embedding can not make a proper trade-off between the model complexity and the model expressiveness, which makes them still far from satisfactory.
To mitigate this problem, we propose a lightweight modeling framework that can achieve highly competitive relational expressiveness without increasing the model complexity.
Our framework focuses on the design of scoring functions and highlights two critical characteristics:
1) facilitating sufficient feature interactions; 
2) preserving both symmetry and antisymmetry properties of relations.
It is noteworthy that owing to the general and elegant design of scoring functions, our framework can incorporate many famous existing methods as special cases.
Moreover, extensive experiments on public benchmarks demonstrate the efficiency and effectiveness of our framework.
Source codes and data can be found at \url{https://github.com/Wentao-Xu/SEEK}.
\end{abstract}

\section{Introduction}
\label{sec:intro}

Learning embeddings for a knowledge graph (KG) is a vital task in artificial intelligence (AI) and can benefit many downstream applications, such as personalized recommendation~\cite{zhang2016collaborative,wang2018dkn} and question answering~\cite{huang2019knowledge}.
In general, a KG stores a large collection of entities and inter-entity relations in a triple format, $(h, r, t)$, where $h$ denotes the \underline{h}ead entity, $t$ represents the \underline{t}ail entity, and $r$ corresponds to the relationship between $h$ and $t$. 
The goal of knowledge graph embedding (KGE) is to project massive interconnected triples into a low-dimensional space and preserve the initial semantic information at the same time.

Although recent years witnessed tremendous research efforts on the KGE problem, existing research did not make a proper trade-off between the model complexity (the number of parameters) and the model expressiveness (the performance in capturing semantic information).
To illustrate this issue, we categorize existing research into two categories.

The first category of methods prefers the simple model but suffers from poor expressiveness.
Some early KGE methods, such as TransE~\cite{bordes2013translating} and DistMult~\cite{yang2015embedding}, fell into this category.
It is easy to apply these methods to large-scale real-world KGs, but their performance in capturing semantic information (such as link prediction) is far from satisfactory.

In contrast, the second category pursues the excellent expressiveness but introduces much more model parameters and tensor computations.
Typical examples include TransH~\cite{wang2014knowledge}, TransR~\cite{lin2015learning}, TransD~\cite{ji2015knowledge}, Single DistMult~\cite{kadlec2017knowledge}, ConvE~\cite{dettmers2018conve} and InteractE~\cite{vashishth2019interacte}.
However, as pointed out by~\citet{dettmers2018conve}, the high model complexity often leads to poor scalability, which is prohibitive in practice because real-world KGs usually contain massive triples.

To address these drawbacks of existing methods, in this paper, we propose a light-weight framework for KGE that achieves highly competitive expressiveness without the sacrifice in the model complexity.
Next, we introduce our framework from three aspects: 
1) facilitating sufficient feature interactions, 
2) preserving various necessary relation properties, 
3) designing both efficient and effective scoring functions.

First, to pursue high expressiveness with the reasonable model complexity, we need to facilitate more sufficient feature interactions given the same number of parameters. 
Specifically, we divide the embedding dimension into multiple segments and encourage the interactions among different segments.
In this way, we can obtain highly expressive representations without increasing model parameters.
Accordingly, we name our framework as \underline{Se}gmented \underline{E}mbedding for \underline{K}Gs (SEEK). 

Second, it is crucial to preserve different relation properties, especially the symmetry and the antisymmetry.
We note that some previous research did not preserve the symmetry or the antisymmetry and thus obtained inferior performance~\cite{bordes2013translating,lin2015learning,yang2015embedding}.
Similar to the recent advanced models~\cite{trouillon2016complex,kazemi2018simple,sun2018rotate,DBLP:conf/acl/XuL19}, we also pay close attention to the modeling support of both symmetric and antisymmetric relationships.

Third, after an exhaustive review of the literature, we find that one critical difference between various KGE methods lies in the design of scoring functions.
Therefore, we dive deeply into designing powerful scoring functions for a triple $(h, r, t)$.
Specifically, we combine the above two aspects (facilitating feature interactions and preserving various relation properties) and develop four kinds of scoring functions progressively.
Based on these scoring functions, we can specify many existing KGE methods, including DistMult~\cite{yang2015embedding}, HoIE~\cite{nickel2016holographic}, and ComplEx~\cite{trouillon2016complex}, as special cases of SEEK.
Hence, as a general framework, SEEK can help readers to understand better the pros and cons of existing research as well as the relationship between them.
Moreover, extensive experiments demonstrate that SEEK can achieve either state-of-the-art or highly competitive performance on a variety of benchmarks for KGE compared with existing methods.

In summary, this paper makes the following contributions.
\begin{itemize}[label={-},leftmargin=1.5em]
	\item We propose a light-weight framework (SEEK) for KGE that achieves highly competitive expressiveness without the sacrifice in the model complexity.
	\item As a unique framework that focuses on designing scoring functions for KGE, SEEK combines two critical characteristics: facilitating sufficient feature interactions and preserving fundamental relation properties.
	\item As a general framework, SEEK can incorporate many previous methods as special cases, which can help readers to understand and compare existing research.
	\item Extensive experiments demonstrate the effectiveness and efficiency of SEEK. Moreover, sensitivity experiments about the number of segments also verify the robustness of SEEK. 
\end{itemize}

\section{Related Work}
\label{sec:related}

\begin{table*}[!h]
	\centering
	\TableSize
	\begin{tabular}{ l |c| c | c | c c}
		\toprule
		\multirow{2}{*}{\textbf{Methods}} & \multirow{2}{*}{\textbf{Scoring Function}} &  \multirow{2}{*}{\textbf{Performance}} & \multirow{2}{*}{\textbf{\# Parameters}} &\multicolumn{2}{c}{\textbf{Properties}} \\
		\cmidrule{5-6}
		& & & & \textbf{Sym}  & \textbf{Antisym}\\
		\midrule
		TransE~\cite{bordes2013translating} & $||\SegVec{h} + \SegVec{r} -\SegVec{t}||$ & Low & Small & \xmark & \cmark\\
		DistMult~\cite{yang2015embedding} & $\left< \SegVec{h}, \SegVec{r}, \SegVec{t} \right>$ & Low  & Small & \cmark & \xmark\\
		ComplEx~\cite{trouillon2016complex} & $Re(\left<\SegVec{h}, \SegVec{r}, \overline{\SegVec{t}} \right>)$ & Low   & Small  & \cmark & \cmark\\
		Single DistMult~\cite{kadlec2017knowledge} &$\left< \SegVec{h}, \SegVec{r}, \SegVec{t} \right>$ & High  & Large & \cmark & \xmark\\
		ConvE~\cite{dettmers2018conve} & $f(\rm{vec}(f([\SegVec{h},\SegVec{r}]*\omega))\rm{W})\SegVec{t}$& High  & Large & \xmark &  \cmark\\
		\midrule
		SEEK & $\sum s_{x, y} \left< \SegVec{r}_x, \SegVec{h}_y, \SegVec{t}_{w_{x, y}} \right>$  & High  & Small & \cmark & \cmark\\
		\bottomrule
	\end{tabular}
	\caption{Comparison between our SEEK framework and some representative knowledge graph embedding methods in the aspects of the scoring function, performance, the number of parameters, and the ability to preserve the symmetry and antisymmetry properties of relations.}	
	\label{tab:compare}
\end{table*}

We can categorize most of the existing work into two categories according to the model complexity and the model expressiveness.

The first category of methods is the simple but lack of expressiveness, which can easily scale to large knowledge graphs. This kind of methods includes TransE~\cite{bordes2013translating} and DistMult~\cite{yang2015embedding}. TransE uses relation $r$ as a translation from a head entity $h$ to a tail entity $t$ for calculating their embedding vectors of $(h, r, t)$; DistMult utilizes the multi-linear dot product as the scoring function.

The second kind of work introduces more parameters to improve the expressiveness of the simple methods. TransH~\cite{wang2014knowledge}, TransR~\cite{lin2015learning}, TransD~\cite{ji2015knowledge}, and ITransF~\cite{xie2017interpretable} are the extensions of TransE, which introduce other parameters to map the entities and relations to different semantic spaces. The Single DistMult~\cite{kadlec2017knowledge} increases the embedding size of the DistMult to obtain more expressive features. Besides, ProjE~\cite{shi2017proje}, ConvE~\cite{dettmers2018conve} and InteractE~\cite{vashishth2019interacte} leverage neural networks to capture more feature interactions between embeddings and thus improves the expressiveness. However, these neural network-based methods would also lead to more parameters since there are many parameters in the neural network. Although the second kind of methods has a better performance compared with simple methods, they are difficult to apply to real-world KGs due to the high model complexity (a large number of parameters).

Compared with the two types of methods above, our SEEK can achieve high expressiveness without increasing the number of model parameters. Table~\ref{tab:compare} shows the comparison between our framework and some representative KGE methods in different aspects. 

Besides, preserving the symmetry and antisymmetry properties of relations is vital for KGE models. Many recent methods devote to preserving these relation properties to improve the expressiveness of embeddings~\cite{trouillon2016complex,nickel2016holographic,guo2018:RUGE, boyang2018:aer,kazemi2018simple,sun2018rotate,DBLP:conf/acl/XuL19}.  Motivated by these methods, we also pay attention to preserving symmetry and antisymmetry properties of relations when we design our scoring functions.
\section{SEEK}
\label{sec:complex}


Briefly speaking, we build SEEK by designing scoring functions, which is one of the most critical components of various existing KGE methods, as discussed in the related work.
During the procedure of designing scoring functions, we progressively introduce two characteristics that hugely contribute to the model expressiveness:
1) facilitating sufficient feature interactions;
2) supporting both symmetric and antisymmetric relations.
In this way, SEEK enables the excellent model expressiveness given a light-weight model with the same number of parameters as some simple KGE counterparts, such as TransE~\cite{bordes2013translating} and DistMult~\cite{yang2015embedding}.

\subsection{Scoring Functions}
In this section, we illustrate our four scoring functions progressively.

\subsubsection{$\bm{f_1}$: Multi-linear Dot Product}
First, we start with the scoring function $f_1$ developed by~\citet{yang2015embedding}, which computes a multi-linear dot product of three vectors:
\begin{align}
f_1(h, r, t) = \left< \SegVec{r}, \SegVec{h}, \SegVec{t}\right> = \sum_{i} r_i \cdot h_i \cdot t_i,
\label{eq:dot-product}
\end{align}
where $\SegVec{r}, \SegVec{h}, \text{and } \SegVec{t}$ are low-dimensional representations of the relation $r$, the head entity $h$, and the tail entity $t$, respectively, and $r_i, h_i, \text{and } t_i$ correspond to the $i$-th dimension of $\SegVec{r}, \SegVec{h}, \text{and } \SegVec{t}$, respectively.

We note that the function $f_1$ is the building block of much previous research~\cite{trouillon2016complex,kadlec2017knowledge,kazemi2018simple}.
Different from these existing research, we focus on designing more advanced scoring functions with better expressiveness.

\subsubsection{$\bm{f_2}$: Multi-linear Dot Product Among Segments}

Next, we introduce fine-grained feature interactions to improve the model expressiveness further.
To be specific, we develop the scoring function $f_2$ that conducts the multi-linear dot product among different segments of the entity/relation embeddings.
First, we uniformly divide the $d$-dimensional embedding of the head $h$, the relation $r$, and the tail $t$ into $k$ segments, and the dimension of each segment is $d/k$.
For example, we can write the embedding of relation $\SegVec{r}$ as:
$$\SegVec{r} = [\SegVec{r}_0, \SegVec{r}_1, \ldots, \SegVec{r}_{k-1}], \quad \SegVec{r}_x \in \mathbb{R}^{d/k},$$
where $\SegVec{r}_x$ is the $x$-th segment of the embedding $\SegVec{r}$.

Then, we define the scoring function $f_2$ as follows:
\begin{align}
f_2(h, r, t) = \sum_{0 \leq x, y, w < k}  \left< \SegVec{r}_x, \SegVec{h}_y, \SegVec{t}_w \right>.
\label{eq:seg-score-function}
\end{align}
Compared with the scoring function $f_1$, where the interactions only happen among the same positions of $\SegVec{h}, \SegVec{r}, \text{and } \SegVec{t}$ embeddings, the scoring function $f_2$ can exploit more feature interactions among different segments of embeddings.

\begin{figure}[!h]
	\centering
	\includegraphics[width=1.0\columnwidth]{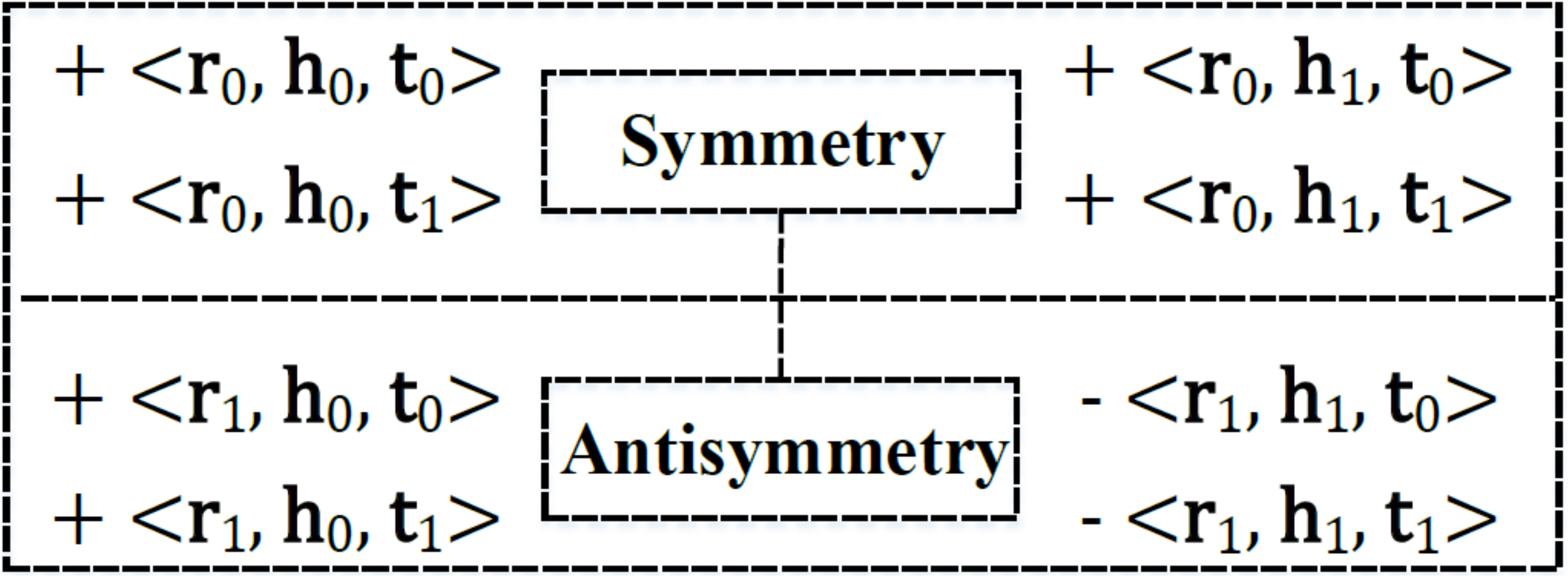}
	\caption{Scoring function $f_3$ with $k=2$.}
	\label{fig:f3_example}
\end{figure}
\subsubsection{$\bm{f_3}$: Modeling both Symmetric and Antisymmetric Relations}
Although the scoring function $f_2$ can facilitate fine-grained feature interactions, it can only preserve the symmetry property of relations and can not support the modeling of antisymmetric relations.
For example, given a symmetric relation $r$, we have $f_2(h, r, t) = f_2(t, r, h)$, but for an antisymmetric relation $r'$, the value of $f_2(h, r', t)$ is also equal to $f_2(t, r', h)$, which is unreasonable because $(t, r', h)$ is a false triple.

To preserve the antisymmetry property of relations, we divide the segments of relation embedding $\SegVec{r}$ into odd and even parts. Then we define a variable $s_{x, y}$ to enable the even parts of segments to capture the symmetry property of relations and the odd parts to capture the antisymmetry property. We define the scoring function after adding $s_{x, y}$ as:
\begin{align}
f_3(h, r, t) = \sum_{0 \leq x, y, w < k} s_{x, y} \cdot \left< \SegVec{r}_x, \SegVec{h}_y, \SegVec{t}_w \right>,
\label{eq:anti-score-function}
\end{align}
where
\[
s_{x, y} = 
\left\{
\begin{array}{cl}
-1, & \text{if } x \text{ is odd and } x + y \geq k, \\
1,  & \text{otherwise}. \\
\end{array}
\right.
\]
In the scoring function $f_3$, $s_{x, y}$ indicates the sign of each dot product term $\left< \SegVec{r}_x, \SegVec{h}_y, \SegVec{t}_w \right>$. 
Figure~\ref{fig:f3_example} depicts an example of the function $f_3$ with $k=2$.
When $\SegVec{r}_x$ is the even part of $\SegVec{r}$ (the index $x$ is even), $s_{x, y}$ is positive, and the summation $\sum_{s_{x, y} == 1} s_{x, y} \cdot \left< \SegVec{r}_x, \SegVec{h}_y, \SegVec{t}_w \right>$ of $f_3(h, r, t)$ equals to the corresponding one $\sum_{s_{x, y} == 1} s_{x, y} \cdot \left< \SegVec{r}_x, \SegVec{t}_y, \SegVec{h}_w \right>$ of $f_3(t, r, h)$.
Therefore, the function $f_3$ can model symmetric relations via the even segments of $\SegVec{r}$.
When $\SegVec{r}_x$ is the odd part of $r$ (the index $x$ is odd), $s_{x, y}$ can be either negative or positive depending on whether $x + y \geq k$.
Then, the summation of odd parts of $f_3(h, r, t)$ is differ from that of $f_3(t, r, h)$.
Accordingly, $f_3(h, r, t)$ can support antisymmetric relations with the odd segments of $\SegVec{r}$. 

\begin{figure}[!h]
	\centering
	\includegraphics[width=1.0\columnwidth]{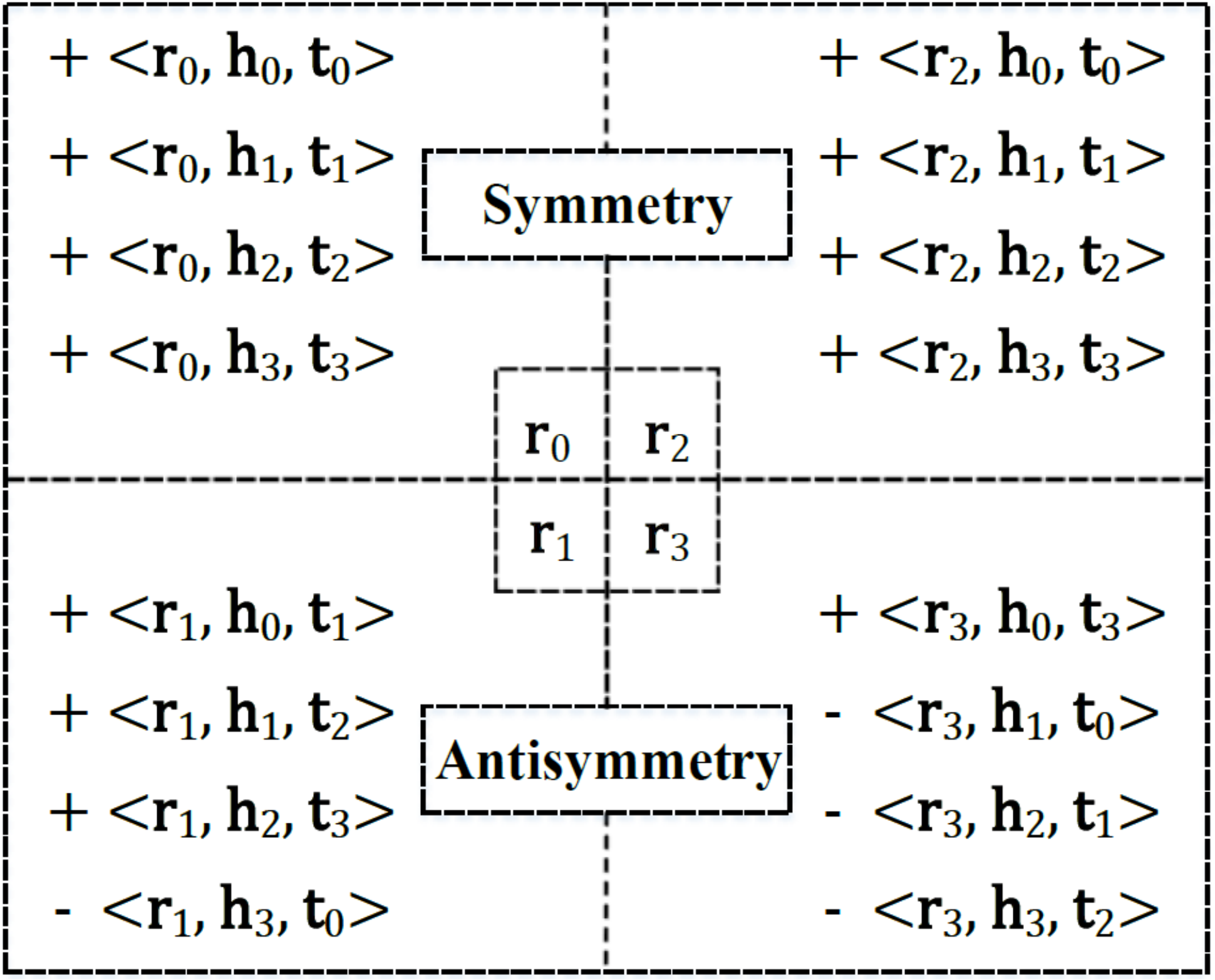}
	\caption{Scoring function $f_4$ with $k=4$.}
	\label{fig:embedding_example}
\end{figure}

The scoring function $f_3$ can support both symmetric and antisymmetric relations inherently because of the design of segmented embeddings.
Moreover, the optimization of relation embeddings is entirely data-driven, and thus we focus on providing the proper mechanism to capture common relation properties.

\subsubsection{$\bm{f_4}$: Reducing Computing Overheads}
However, though capturing various relation properties, the function $f_3$ suffers from huge computation overheads.
The time complexity of function $f_3$ is $O(k^2d)$ because there are $k^3$ dot product terms $\left< \SegVec{r}_x, \SegVec{h}_y, \SegVec{t}_w \right>$ in total.
Therefore, the scoring function $f_3$ needs $k^3$ times of dot product to compute the score of a triple $(h, r, t)$.
Recall that the dimension of each segment is $d/k$, so each multi-linear dot product requires $O(d/k)$ times of multiplication.
As a conclusion, the time complexity of the function $f_3$ is $O(k^2 d)$, which can be calculated by $O(k^3 \times d/k)$.
To reduce the computation overheads of the function $f_3$, we introduce another variable $w_{x,y}$ for the index of tail entity $t$.
Accordingly, we define the scoring function $f_4$ as follows.
\begin{align}
f_4(h, r, t) = \sum_{0 \leq x, y < k} s_{x, y} \cdot \left< \SegVec{r}_x, \SegVec{h}_y, \SegVec{t}_{w_{x,y}} \right>,
\label{eq:seek-score-function}
\end{align}
where
\[
w_{x, y} = 
\left\{
\begin{array}{cl}
y, & \text{if } x \text{ is even}, \\
(x + y) \ \% \ k,  & \text{if } x \text{ is odd}. \\
\end{array}
\right.
\]
The scoring function $f_4$ reduces the number of dot product terms to $k^2$, so its time complexity is $O(kd)$ (calculated by $O(k^2\times d/k)$).
Moreover, the scoring function $f_4$ can also preserve symmetry property in the even parts of $\SegVec{r}$ and preserve antisymmetry property in the odd parts of $\SegVec{r}$. 

Figure~\ref{fig:embedding_example} shows the example of the scoring function $f_4$ with $k=4$. The dot product terms in Figure~\ref{fig:embedding_example} can be categorized into four groups according to the segment indexes of $\SegVec{r}$.  
In the groups of $\SegVec{r}_0$ and $\SegVec{r}_2$, which are the even parts of $\SegVec{r}$, the segment $\SegVec{t}_{w_{x,y}}$'s index $w_{x,y}$ is same as the segment $\SegVec{h}_y$'s index $y$, and $s_{x,y}$ is always positive. Thus, the summation $\sum s_{x, y} \cdot \left< \SegVec{r}_x, \SegVec{h}_y, \SegVec{t}_{w_{x,y}} \right>$ of the even parts of  $f_4(h, r, t)$ is equal to the corresponding one $\sum s_{x, y} \cdot \left< \SegVec{r}_x, \SegVec{t}_y, \SegVec{h}_{w_{x,y}} \right>$ of $f_4(t, r, h)$. 
In the groups of $\SegVec{r}_1$ and $\SegVec{r}_3$, which are the odd parts of $\SegVec{r}$, the segment indexes of $\SegVec{t}$ are $(x + y) \ \% \ k$, where $x$ and $y$ are the indexes of $\SegVec{r}$ and $\SegVec{h}$, respectively. When $x + y \geq k$, the variable $s_{x, y}$ will change from positive to negative. So the summation of the odd parts of $f_4(h, r, t)$ and $f_4(t, r, h)$ will not be the same. Besides, it is apparent that the number of feature interactions on $h$, $r$ and $t$ are increasing $k$ times since each segment has $k$ interactions with other segments.

In summary, the scoring function $f_4$ of our SEEK framework has the following characteristics:
\begin{itemize}[label={-}]

	\item \textit{Tunable Computation.} The scoring function exactly involves each segment of $\SegVec{r}$, $\SegVec{h}$, and $\SegVec{t}$ $k$ times. Thus the number of feature interactions and the computation cost are fully tunable with a single hyperparameter $k$.
	
	\item \textit{Symmetry and Antisymmetry Preservation.}  The even parts of $\SegVec{r}$ can preserve the symmetry property of relations, and the odd parts of $\SegVec{r}$ can preserve the antisymmetry property.
	
	\item \textit{Dimension Isolation.}  The dimensions within the same segment are isolated from each other, which will prevent the embeddings from excessive correlations.
\end{itemize}

\subsection{Discussions}
\paragraph{Complexity analysis}
As described before, the number of dot product terms in scoring function $f_4$ is $k^2$, and each term requires $O(d/k)$ times of multiplication. So the time complexity of our SEEK framework is $O(k d)$ (calculated by $O(k^2\times d/k)$), where $k$ is a small constant such as 4 or 8. For the space complexity, the dimension of entity and relation embeddings is $d$, and there are no other parameters in our SEEK framework. Thus, the space complexity of SEEK is $O(d)$. The low time and space complexity of our framework demonstrate that our SEEK framework has high scalability, which is vital for large-scale real-world knowledge graphs.

\paragraph{Connection with existing methods}
Our SEEK framework is a generalized framework of some existing methods, such as DistMult~\cite{yang2015embedding}, ComplEx~\cite{trouillon2016complex}, and HolE~\cite{nickel2016holographic}. In the following, we will prove that these methods are special cases of our framework when we set $k=1$ and $k=2$, respectively.
\begin{proposition}
	SEEK ($k=1$) is equivalent to DistMult.
\end{proposition}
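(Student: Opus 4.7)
The plan is to substitute $k=1$ directly into the definition of the scoring function $f_4$ (Equation~\ref{eq:seek-score-function}) and verify that the resulting expression matches the DistMult scoring function $f_1$ from Equation~\ref{eq:dot-product}.

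First, I would note that when $k=1$, each embedding vector $\SegVec{r}$, $\SegVec{h}$, $\SegVec{t} \in \mathbb{R}^d$ has exactly one segment of dimension $d$, namely $\SegVec{r}_0 = \SegVec{r}$, $\SegVec{h}_0 = \SegVec{h}$, and $\SegVec{t}_0 = \SegVec{t}$. The outer summation over $0 \le x, y < k$ collapses to the single term $(x, y) = (0, 0)$.

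Next, I would evaluate the two auxiliary quantities at $(x, y) = (0, 0)$. Since $x = 0$ is even, the definition of $w_{x,y}$ gives $w_{0,0} = y = 0$. Also, since $x = 0$ is not odd, the piecewise definition of $s_{x,y}$ yields $s_{0,0} = 1$. Substituting these into $f_4$ gives
\begin{equation*}
f_4(h, r, t) = 1 \cdot \left< \SegVec{r}_0, \SegVec{h}_0, \SegVec{t}_0 \right> = \left< \SegVec{r}, \SegVec{h}, \SegVec{t} \right>,
\end{equation*}
which is precisely $f_1(h, r, t)$, the DistMult scoring function.

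There is no real obstacle here; the proposition is essentially a direct unfolding of definitions. The only thing worth being careful about is the convention that $0$ counts as even, so that both $s_{0,0}$ and $w_{0,0}$ fall into the ``even'' branches of their piecewise definitions. Once that is observed, the equivalence with DistMult follows immediately, and since SEEK uses the same $d$-dimensional real embeddings per entity and relation as DistMult, the parameter spaces coincide as well.
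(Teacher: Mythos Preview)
Your proof is correct and follows the same approach as the paper's own proof, which simply observes that with $k=1$ the only term is $(x,y)=(0,0)$, so $\SegVec{r}_0=\SegVec{r}$, $\SegVec{h}_0=\SegVec{h}$, $\SegVec{t}_0=\SegVec{t}$ and $f_4$ reduces to $\langle \SegVec{r},\SegVec{h},\SegVec{t}\rangle$. Your version is slightly more explicit in verifying $s_{0,0}=1$ and $w_{0,0}=0$ via the even branch, but the argument is the same.
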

\begin{proof}
	The proof is trivial. Given $k=1$, we have $x=0$ and $y=0$ in scoring function $f_4$ and $\SegVec{r}_0 = \SegVec{r}$, $\SegVec{h}_0 = \SegVec{h}$, and $\SegVec{t}_0 = \SegVec{t}$. Thus the function $f_4$ can be written as $f_4^{k=1}(h, r, t) = \left< \SegVec{r}, \SegVec{h}, \SegVec{t}\right>$, which is the same scoring function of DistMult.
\end{proof}
\begin{proposition}
	SEEK ($k=2$) is equivalent to the ComplEx and HolE.
\end{proposition}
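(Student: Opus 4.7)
The plan is to unfold the sum in $f_4$ for $k=2$ into its four summands, read off the correspondence with the real and imaginary parts of the complex dot product used by ComplEx, and then invoke the known equivalence between ComplEx and HolE to finish the second half.

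First I would enumerate the admissible pairs $(x,y) \in \{0,1\}^2$ and compute $s_{x,y}$ and $w_{x,y}$ by the definitions stated just above. The $x=0$ rows give $w_{0,y}=y$ with $s=1$, while for $x=1$ we get $w_{1,0}=1, s_{1,0}=1$ and $w_{1,1}=0, s_{1,1}=-1$ (since $x+y=2\ge k$). Substituting back yields
\begin{equation*}
f_4^{k=2}(h,r,t) = \langle \SegVec{r}_0,\SegVec{h}_0,\SegVec{t}_0\rangle + \langle \SegVec{r}_0,\SegVec{h}_1,\SegVec{t}_1\rangle + \langle \SegVec{r}_1,\SegVec{h}_0,\SegVec{t}_1\rangle - \langle \SegVec{r}_1,\SegVec{h}_1,\SegVec{t}_0\rangle.
\end{equation*}
Next, to match ComplEx I would identify the even segments $\SegVec{r}_0,\SegVec{h}_0,\SegVec{t}_0$ with the real parts and the odd segments $\SegVec{r}_1,\SegVec{h}_1,\SegVec{t}_1$ with the imaginary parts of complex vectors $\tilde{\SegVec{r}},\tilde{\SegVec{h}},\tilde{\SegVec{t}} \in \mathbb{C}^{d/2}$. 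A direct expansion of $(r_0+ir_1)(h_0+ih_1)(t_0-it_1)$ coordinatewise and taking the real part produces exactly the four terms above with the correct signs, so $f_4^{k=2}(h,r,t) = \mathrm{Re}(\langle \tilde{\SegVec{r}},\tilde{\SegVec{h}},\overline{\tilde{\SegVec{t}}}\rangle)$, which is the ComplEx score. This half is purely mechanical algebra.

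For HolE I would not re-derive everything from scratch; instead I would appeal to the well-known result of Hayashi and Shimbo that the HolE score based on circular correlation is equivalent (up to a unitary reparameterization via the discrete Fourier transform) to the ComplEx score. Composing this equivalence with the one just established gives the HolE case. I would note the mild caveat that HolE, as originally written, uses real-valued embeddings of dimension $d$ that correspond via the DFT to conjugate-symmetric complex embeddings, and that this reparameterization does not change the set of representable scoring functions.

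The main obstacle I expect is the HolE side: the ComplEx equivalence reduces to bookkeeping on four terms, but HolE is defined via circular correlation and its equivalence to ComplEx is genuinely nontrivial, relying on the convolution theorem. If a self-contained argument is demanded rather than a citation, I would need to write $\SegVec{r}^\top(\SegVec{h}\star\SegVec{t})$ in the Fourier basis, use that the DFT of real vectors is conjugate-symmetric, and match the resulting real-part expression with the four-term form of $f_4^{k=2}$ after re-indexing; this is where the real work lies.
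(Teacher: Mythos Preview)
Your proposal is correct and mirrors the paper's own proof almost exactly: the paper likewise expands $f_4^{k=2}$ into the same four multi-linear dot-product terms, identifies segment~0 with the real part and segment~1 with the imaginary part to recover the ComplEx score, and then cites Hayashi and Shimbo for the ComplEx--HolE equivalence rather than re-deriving it. Your additional remarks on the DFT reparameterization and conjugate symmetry go slightly beyond what the paper states but are consistent with it.
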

\begin{proof}
	Given $k=2$, function $f_4$ can be written as:
	$$
	f_4^{k=2}(h, r, t) = \sum_{x=0, 1} \sum_{y=0, 1} s_{x, y} \cdot \left< \SegVec{r}_x, \SegVec{h}_y, \SegVec{t}_{w_{x, y}} \right>,
	$$
	then we expand the right part of the equation:
	$$
	\left< \SegVec{r}_0, \SegVec{h}_0, \SegVec{t}_0 \right> + \left< \SegVec{r}_0, \SegVec{h}_1, \SegVec{t}_1 \right>
	+ \left< \SegVec{r}_1, \SegVec{h}_0, \SegVec{t}_1 \right> - \left< \SegVec{r}_1, \SegVec{h}_1, \SegVec{t}_0 \right>.
	$$
	If we consider $\SegVec{r}_0, \SegVec{h}_0, \SegVec{t}_0$ as the real part of $\SegVec{r}, \SegVec{h}, \SegVec{t}$, and $\SegVec{r}_1, \SegVec{h}_1, \SegVec{t}_1$ as the imaginary part,
	then $f_4^{k=2}(h, r, t)$ is exactly the scoring function of ComplEx framework. Since~\cite{hayashi2017equivalence} has already discussed the equivalence of ComplEx and HolE, the SEEK ($k=2$) is also equivalent to the HolE framework.
\end{proof}

\subsection{Training}
SEEK takes the negative log-likelihood loss function with $L_2$ regularization as its objective function to optimize the parameters of entities and relations:
\begin{align}
\min_{\Theta} \sum_{(h, r, t) \in \Omega}-\log (\sigma (Y_{hrt}f_4(h, r, t))) + \frac{\lambda}{2d} ||\Theta||^2_2,
\label{eq:objective-function}
\end{align}
where $\sigma$ is a sigmoid function defined as $\sigma(x) = \frac{1}{1 + e^{-x}}$, and $\Theta$ represents the parameters in the embeddings of entities and relations in knowledge graphs; $\Omega$ is the triple set containing the true triples in the knowledge graphs and the false triples generated by negative sampling. In the negative sampling, we generate a false triple $(h^{'}, r, t)$ or $(h, r, t^{'})$ by replacing the head or tail entity of a true triple with a random entity. $Y_{hrt}$ is the label of $(h, r, t)$, which is $1$ for the true triples and $-1$ for the false triples. $\lambda$ is the $L_2$ regularization parameter.

The gradients of Equation~\ref{eq:objective-function} are then given by:
\begin{align}
\frac{\partial \mathcal{L}}{\partial \theta} = \frac{\partial \mathcal{L}}{\partial f_4}\frac{\partial f_4}{\partial \theta} + \frac{\lambda \theta}{d},
\end{align}
where $\mathcal{L}$ represents the objective function of SEEK, and $\theta$ is the parameters in the segments. Specifically, the partial derivatives of function $f_4$ for the $x$-th segment of $\SegVec{r}$ and the $y$-th segment of $\SegVec{h}$ are:
$$\frac{\partial f_4}{\partial \SegVec{r}_x} = \sum_{0 \leq y < k} s_{x, y} \cdot (\SegVec{h}_y \odot \SegVec{t}_{w_{x, y}}),$$
$$\frac{\partial f_4}{\partial \SegVec{h}_y} = \sum_{0 \leq x < k} s_{x, y} \cdot (\SegVec{r}_x \odot \SegVec{t}_{w_{x, y}}),$$
where $\odot$ is the entry-wise product of two vectors, e.g. $\SegVec{c} = \SegVec{a} \odot \SegVec{b}$ 
results in the $i$-th dimension of $\SegVec{c}$ is $\SegVec{a}_i \cdot \SegVec{b}_i$.
The derivative of scoring function $f_4$ for $\SegVec{t}_{w}$ is different from those of the above two:
$$\frac{\partial f_4}{\partial \SegVec{t}_w} = \sum_{0 \leq x, y < k} \mathbbm{1}_{[w=w_{x, y}]} \cdot s_{x, y} \cdot (\SegVec{r}_x \odot \SegVec{h}_{y}),$$
where $\mathbbm{1}_{[w=w_{x, y}]}$ has value $1$ if $w=w_{x, y}$ holds, otherwise it is $0$.

\section{Experimental Evaluation}

\label{sec:exp}
In this section, we present thorough empirical studies to evaluate and analyze our proposed SEEK framework. 
We first introduce the experimental setting. Then we evaluate our SEEK framework on the task of link prediction. Then, we study the influence of the number of segments $k$ to the SEEK framework, and present the case studies to demonstrate why our SEEK framework has high effectiveness.

\subsection{Experimental Setting}
\paragraph*{Datasets}
In our experiments, we firstly use a \textit{de facto} benchmark dataset: FB15K. FB15K is a subset of the Freebase dataset~\cite{bollacker2008freebase}, and we used the same training, validation and test set provided by \cite{bordes2013translating}. We also use another two new datasets proposed in recent years: DB100K~\cite{boyang2018:aer} and YAGO37~\cite{guo2018:RUGE}. DB100K was built from the mapping-based objects of core DBpedia~\cite{bizer2009dbpedia}; YAGO37 was extracted from the core facts of YAGO3~\cite{mahdisoltani2013yago3}. Table~\ref{tab:datasets} lists the statistics of the three datasets.

\begin{table}[!h]
	\centering
	\TableSize
	\begin{tabular}{ p{3.4em} | p{2.9em}  p{1.9em}  p{3em}  p{2.5em}  p{2.5em} }
		\toprule
		\textbf{Dataset} & \textbf{\#Ent} & \textbf{\#Rel} & \textbf{\#Train} & \textbf{\#Valid} & \textbf{\#Test} \\
		\midrule
		FB15K & $14,951$ & $1,345$ & $483,142$ & $50,000$ & $59,071$  \\
		DB100K & $99,604$ &  $470$ &  $597,572$ &  $50,000$ &  $50,000$ \\
		YAGO37 & $123,189$ & $37$ & $989,132$ & $50,000$ & $50,000$ \\
		\bottomrule
	\end{tabular}
	\caption{Statistics of datasets.}
	\label{tab:datasets}
\end{table}

\paragraph*{Compared Methods}
There are many knowledge graph embedding methods developed in recent years. We categorize the compared methods as the following groups:
\begin{itemize}[label={-},leftmargin=1.5em]
	\item Some simple knowledge graph embedding methods that have low time and space complexity, like TransE~\cite{bordes2013translating}, DistMult~\cite{yang2015embedding}, HolE~\cite{nickel2016holographic}, ComplEx~\cite{trouillon2016complex}, and Analogy~\cite{liu2017analogical}. Specifically, TransE is a translation based method, and others are the multi-linear dot product-based framework.
	
	\item Some methods that achieve state-of-the-art performance on DB100K and YAGO37, which include RUGE~\cite{guo2018:RUGE} and ComplEx-NNE+AER~\cite{boyang2018:aer}.
	
	\item Some latest methods that achieve current state-of-the-art performance on FB15K, including Single DistMult~\cite{kadlec2017knowledge}, ConvE~\cite{dettmers2018conve}, SimplE~\cite{kazemi2018simple}, RotatE~\cite{sun2018rotate}, and DihEdral~\cite{DBLP:conf/acl/XuL19}. 
	
	\item We evaluate the scoring function $f_2$ to apply an ablation study for our approach. Then we can observe the respective effect of facilitating sufficient feature interactions and preserving the relation properties. Since the scoring function $f_2$ can only preserve the symmetric property, we refer to it as Sym-SEEK.
\end{itemize}
Since our framework does not use additional information like text~\cite{toutanova2015observed}, relational path~\cite{ebisu2019graph}, or external memory~\cite{shen2017modeling}, we do not compare the methods with additional information. Moreover, we only compare our method with single models, and the Ensemble DistMult~\cite{kadlec2017knowledge} is a simple ensemble of multiple different methods, so we do not compare with it.

\begin{table*}[!h]
	\centering
	\begin{threeparttable}
		
		\TableSize
		\begin{tabular}{ l |c  c  c  c | c  c  c  c}
			\toprule
			\multirow{3}{*}{\textbf{Methods}} & \multicolumn{4}{c|}{\textbf{DB100K}} &  \multicolumn{4}{c}{\textbf{YAGO37}} \\
			\cmidrule{2-9}
			& \multirow{2}{*}{\textbf{MRR}}  &  \multicolumn{3}{c|}{\textbf{Hits@N}} & \multirow{2}{*}{\textbf{MRR}}  &  \multicolumn{3}{c}{\textbf{Hits@N}} \\ 
			\cmidrule{3-5}	\cmidrule{7-9}
			&  & \textbf{1} & \textbf{3} & \textbf{10} & & \textbf{1} & \textbf{3} & \textbf{10}  \\ 
			\midrule
			TransE~\cite{bordes2013translating}& $0.111$ & $1.6$ & $16.4$ & $27.0$ & $0.303$ & $21.8$ & $33.6$ & $47.5$   \\
			DistMult~\cite{yang2015embedding}& $0.233$ & $11.5$ & $30.1$ & $44.8$ & $0.365$ & $26.2$ & $41.1$ & $57.5$ \\
			HolE~\cite{nickel2016holographic} & $0.260$ & $18.2$ & $30.9$ & $41.1$ &  $0.380$ & $28.8$ & $42.0$ & $55.1$  \\
			ComplEx~\cite{trouillon2016complex}& $0.242$ & $12.6$ & $31.2$ & $44.0$  & $0.417$ & $32.0$ & $47.1$ & $60.3$ \\		
			Analogy~\cite{liu2017analogical} & $0.252$ & $14.2$ & $32.3$ & $42.7$ & $0.387$ & $30.2$ & $42.6$ & $55.6$ \\
			\midrule
			RUGE~\cite{guo2018:RUGE}& $0.246$ & $12.9$ & $32.5$ & $43.3$ &$0.431$ & $34.0$ & $48.2$ & $60.3$ \\
			ComplEx-NNE+AER~\cite{boyang2018:aer} & $0.306$ & $24.4$ & $33.4$ & $41.8$ & $-$ & $-$ & $-$ & $-$ \\
			\midrule
			\textbf{Sym-SEEK}\tnote{*} & $0.306$ &  $22.5$ & $34.3$ & $46.2$ &$0.452$ &  $36.7$ & $\Outperform{49.8}$ & $60.6$ \\
			\textbf{SEEK}\tnote{*} & $\Outperform{0.338}$ &  $\Outperform{26.8}$ & $\Outperform{37.0}$ & $\Outperform{46.7}$ &$\Outperform{0.454}$ &  $\Outperform{37.0}$ & $\Outperform{49.8}$ & $\Outperform{62.2}$ \\
			\bottomrule
		\end{tabular}
		\begin{tablenotes}[para,flushleft]
			\item[*] Statistically significant improvements by independent $t$-test with $p = 0.01$.
		\end{tablenotes}
		
	\end{threeparttable}
	\caption{Results of link prediction on DB100K and YAGO37.}
	\label{tab:kbc-performance-yago37-db100k}
\end{table*}

\paragraph*{Experimental Details}

We use the asynchronous stochastic gradient descent (SGD) with the learning rate adapted by AdaGrad~\cite{duchi2011adaptive} to optimize our framework. The loss function of our SEEK framework is given by Equation \ref{eq:objective-function}. We conducted a grid search to find hypeparameters which maximize the results on validation set, by tuning number of segments $k\in\{1, 2, 4, 8, 16, 20\}$, the dimension of embeddings $D\in\{100, 200, 300, 400\}$, $L_2$ regularization parameter $\lambda\in\{0.1, 0.01, 0.001, 0.0001\}$ and the number of negative samples per true triple $\eta\in\{10, 50, 100, 500, 1000\}$. The optimal combinations of hyperparameters are $k = 8$, $D = 400$, $\lambda = 0.001$, $\eta = 1000$ on FB15K; $k = 4$, $D = 400$, $\lambda = 0.01$, $\eta = 100$ on DB100K; and $k = 4$, $D = 400$, $\lambda = 0.001$, $\eta = 200$ on YAGO37. We set the initial learning rate $lr$ to 0.1 and the number of epochs to 100 for all datasets. 

\subsection{Link Prediction}

We study the performance of our method on the task of link prediction, which is a prevalent task to evaluate the performance of knowledge graph embeddings. We used the same data preparation process as \cite{bordes2013translating}. Specifically, we replace the head/tail entity of a true triple in the test set with other entities in the dataset and name these derived triples as \textit{corrupted triples}. The goal of the link prediction task is to score the original true triples higher than the corrupted ones. We rank the triples by the results of the scoring function.

\begin{table}[!h]
	\centering
	\begin{threeparttable}
		
		\TableSize
		\begin{tabular}{ l | c  c  c  c}
			\toprule
			\multirow{3}{*}{\textbf{Methods }}  &  \multicolumn{4}{c}{\textbf{FB15K}} \\
			\cmidrule{2-5}	
			& \multirow{2}{*}{\textbf{MRR}}  &  \multicolumn{3}{c}{\textbf{Hits@N}} \\ 
			\cmidrule{3-5}	
			& & \textbf{1} & \textbf{3} & \textbf{10}  \\ 
			\midrule
			TransE& $0.380$ & $23.1$ & $47.2$ & $47.1$ \\
			DistMult& $0.654$ & $54.6$ & $73.3$ & $72.8$ \\
			HolE & $0.524$ & $40.2$ & $61.3$ & $73.9$ \\
			ComplEx& $0.692$ & $59.9$ & $75.9$ & $84.0$ \\		
			Analogy& $0.725$ & $64.6$ & $78.5$ & $85.4$ \\
			\midrule
			RUGE& $0.768$ & $70.3$ & $81.5$ & $86.5$ \\
			ComplEx-NNE+AER & $0.803$ & $76.1$ & $83.1$ & $87.4$\\
			\midrule
			Single DistMult & $0.798$ &$-$ &$-$&$\Outperform{89.3}$ \\	
			ConvE& $0.745$ & $67.0$ & $80.1$ & $87.3$ \\
			SimplE& $0.727$ & $66.0$ & $77.3$ & $83.8$  \\
			RotatE& $0.797$ & $74.6$ & $83.0$ & $88.4$ \\ 
			DihEdral & $0.733$ & $64.1$ & $80.3$ & $87.7$ \\
			\midrule
			\textbf{Sym-SEEK}\tnote{*} & $0.796$ &  $74.7$ & $82.9$ & $88.2$ \\
			\textbf{SEEK}\tnote{*} & $\Outperform{0.825}$ &  $\Outperform{79.2}$ & $\Outperform{84.1}$ & $88.6$ \\
			\bottomrule
		\end{tabular}
		
		\begin{tablenotes}[para,flushleft]
			\item[*] Statistically significant improvements by independent $t$-test with $p = 0.01$.
		\end{tablenotes}
		
	\end{threeparttable}
	\caption{Results of link prediction on FB15K.}	
	\label{tab:kbc-performance-fb15k}
\end{table}

We use the MRR and Hit@N metrics to evaluate the ranking results:
\begin{enumerate*}[label={\alph*)}]
	\item MRR: the mean reciprocal rank of original triples;
	\item Hits@N: the percentage rate of original triples ranked at the top $n$ in prediction.
\end{enumerate*}
For both metrics, we remove some of the corrupted triples that exist in datasets from the ranking results, which is also called \emph{filtered} setting in~\cite{bordes2013translating}. We use Hits@1, Hits@3, and Hits@10 for the metrics of Hits@N.

Table~\ref{tab:kbc-performance-yago37-db100k} summarizes the results of link prediction on DB100K and YAGO37, and Table~\ref{tab:kbc-performance-fb15k} shows the results on FB15K. Note, the results of compared methods on DB100K and YAGO37 are taken from~\cite{boyang2018:aer,guo2018:RUGE}; the results on FB15K are taken from~\cite{kadlec2017knowledge,boyang2018:aer,kazemi2018simple,sun2018rotate,DBLP:conf/acl/XuL19}.

\begin{figure}[!h]
	\centering
	\includegraphics[width=0.96\columnwidth]{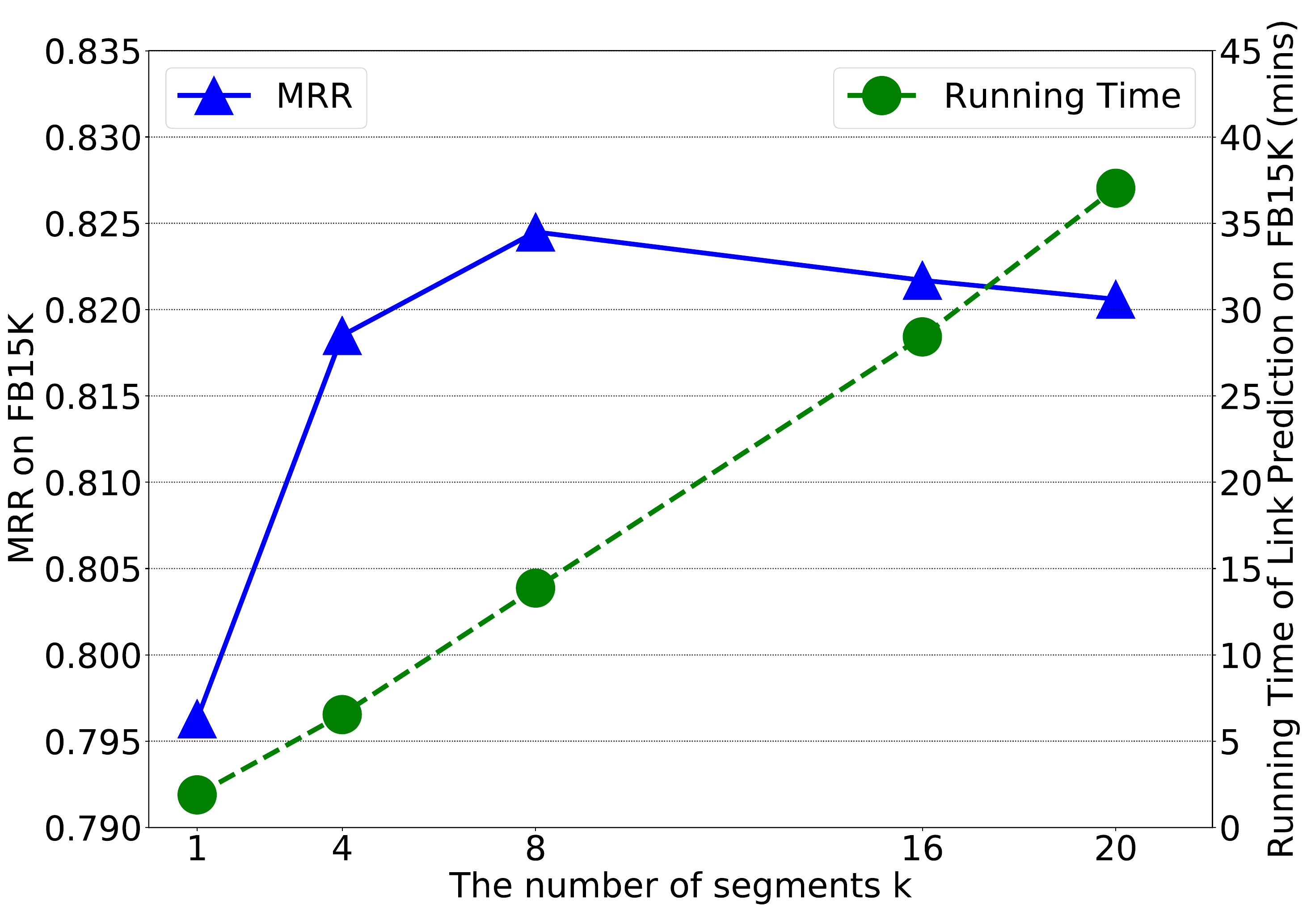}
	\caption{The influence of the number of segments $k$ to the MRR and the running time of link prediction on FB15K.}
	\label{fig:Seg_count}
\end{figure}

On the DB100K, SEEK outperforms the compared methods in all metrics, and the Sym-SEEK also can achieve a good performance. On the YAGO37, the SEEK and Sym-SEEK have a similar result and outperform other previous methods. The results on YAGO37 show that exploiting more feature interactions can significantly improve the performance of the embeddings on YAGO37 while preserving the semantic properties have a slight improvement. On FB15K, SEEK achieves the best performance on MRR, Hit@1 and Hit@3. 
Although SEEK is worse than the Single DistMult on the metrics of Hit@10, the Single DistMult is just a higher dimensional version of DistMult. The Single DistMult uses 512-dimensional embeddings, which is larger than the 400-dimensional embeddings of the SEEK framework. 
On the whole, our method's improvements on these datasets demonstrate that our method has a higher expressiveness.

\subsection{Influence of the Number of Segments $k$}
In the SEEK framework, a larger number of segments $k$ implies more feature interactions and higher computational cost. To empirically study the influence of the number of segments $k$ to the performance and computation time of SEEK, we let $k$ vary in $\{1, 4, 8, 16, 20\}$ and fix all the other hyperparameters, then we observe the MRR and time costs for the link prediction task on the test set of FB15K. 

\begin{figure}[!h]
	\centering
	\includegraphics[width=1.0\columnwidth]{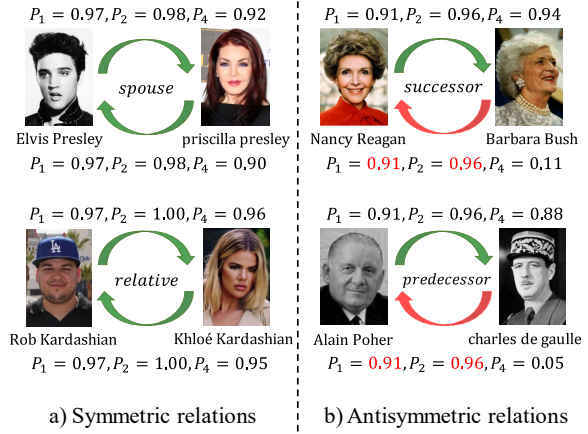}
	\caption{The correct probabilities of four triples in DB100K and their reverse triples. The probabilities $P_1$, $P_2$ and $P_4$ are corresponding to the scoring functions $f_1$, $f_2$ and $f_4$, respectively.}
	\label{fig:case_study}
\end{figure}
Figure~\ref{fig:Seg_count} shows the MRR and time costs of different segment counts $k$ on FB15K. As we can see, changing $k$ affects the performance of knowledge graph embeddings significantly. When $k$ varies from $1$ to $8$, the performance is increased steadily. However, when $k$ becomes even larger, no consistent and dramatic improvements observed on the FB15K dataset. This phenomenon suggests that excessive feature interactions cannot further improve performance. Therefore, $k$ is a sensitive hyperparameter that needs to be tuned for the best performance given a dataset. Figure~\ref{fig:Seg_count} also illustrates that the running time of SEEK is linear in $k$, and it verifies that the time complexity of SEEK is $O(kd)$.

\subsection{Case Studies}
We employ case studies to explain why our framework has a high expressiveness.
Specifically, we utilize the scoring functions $f_1$, $f_2$ and $f_4$ to train the embeddings of DB100K, respectively. Then we use the corresponding scoring functions to score the triples in the test set and their reverse triples, and we feed the scores to the sigmoid function to get the correct probabilities $P_1$, $P_2$ and $P_4$ of each triple.
Figure~\ref{fig:case_study} shows the correct probabilities of some triples. In these triples, two triples have symmetric relations, and the other two have antisymmetric relations. On the triples with symmetric relations, the original triples in the test set and their reverse triples are true triples, and the scoring functions $f_1$, $f_2$, $f_4$ can result in high probabilities on original and reverse triples. On the triples with antisymmetric relations, the reverse triples are false. Since the values of $f_1(h, r, t)$ or $f_2(h, r, t)$ are equal to $f_1(t, r, h)$ or $f_2(t, r, h)$, the scoring functions $f_1$ and $f_2$ result in high probabilities on the reverse triples. But the scoring function $f_4$, which can model both symmetric and antisymmetric relations, results in low probabilities on the reverse triples. Meanwhile, we can also find that function $f_2$ have higher probabilities than function $f_1$ on the true triples. This phenomenon further explains that facilitating sufficient feature interactions can improve the expressiveness of embeddings.

\section{Conclusion and Future Work}
\label{sec:conclusion}
In this paper, we propose a lightweight KGE framework (SEEK) that can improve the expressiveness of embeddings without increasing the model complexity. To this end, our framework focuses on designing scoring functions and highlights two critical characteristics: 1) facilitating sufficient feature interactions and 2) preserving various relation properties.
Besides, as a general framework, SEEK can incorporate many existing models, such as DistMult, ComplEx, and HolE, as special cases.
Our extensive experiments on widely used public benchmarks demonstrate the efficiency, the effectiveness, and the robustness of SEEK.
In the future, we plan to extend the key insights of segmenting features and facilitating interactions to other representation learning problems.
\section*{Acknowledgments}

This work is supported by the National Natural Science Foundation of China (U1711262, U1611264, U1711261, U1811261, U1811264, U1911203), National Key R\&D Program of China (2018YFB1004404), Guangdong Basic and Applied Basic Research Foundation (2019B1515130001),  Key R\&D Program of Guangdong Province (2018B010107005).

\balance
\bibliography{acl2020}
\bibliographystyle{acl_natbib}

\end{document}